\documentclass{ifacconf}
\usepackage{array}
\usepackage{graphicx}      
\usepackage{natbib}        
\usepackage{subcaption}
\usepackage{amssymb}
\usepackage{amsmath}       
\usepackage{xcolor}

\begin{document}
\begin{frontmatter}

\title{Robust Adaptive Backstepping Impedance Control of Robots in Unknown Environments}


\author[First]{Reza Nazmara}
\author[Second]{Alap Kshirsagar}
\author[Second]{Jan Peters}
\author[First]{A. Pedro Aguiar}

\address[First]{Research Center for Systems and Technologies (SYSTEC), ARISE, Faculty of Engineering, University of Porto, 4200-465 Porto, Portugal. (email: up202103272@edu.fe.up.pt; pedro.aguiar@fe.up.pt)}
\address[Second]{Intelligent Autonomous Systems Lab, Department of Computer Science, TU Darmstadt, Germany. (email: alap@robot-learning.de; jan.peters@tu-darmstadt.de)}

\begin{abstract}                
This paper presents a Robust Adaptive Backstepping Impedance Control (RABIC) strategy for robots operating in contact-rich and uncertain environments. The proposed control strategy considers the complete coupled dynamics of the system and explicitly accounts for key sources of uncertainty, including external disturbances and unmodeled dynamics, while not requiring the robot’s dynamic parameters in implementation.  
 We propose a backstepping-based adaptive impedance control scheme for the inner loop to track the reference impedance model. To handle uncertainties, we employ a Taylor series–based estimator for system dynamics, and an adaptive estimator for determining the upper bound of external forces. Stability analysis demonstrates the semi-global practical finite-time stability of the overall system. To demonstrate the effectiveness of the proposed method, a simulated mobile manipulator scenario and experimental evaluations on a real Franka Emika Panda robot were conducted. The proposed approach exhibits safer performance compared to PD control while ensuring trajectory tracking and force monitoring.
 Overall, the RABIC framework provides a solid basis for future research on adaptive and learning-based impedance control for coupled mobile and fixed serially linked manipulators.
\end{abstract}
\begin{keyword}
Impedance control; Robust adaptive control; Backstepping control; serially linked manipulators; Taylor series estimation.
\end{keyword}
\end{frontmatter}



\section{Introduction}
\vspace{-2mm}
Impedance control is a well-established interaction control strategy that enables compliant and safe robot--environment interactions~\citep{hogan1985impedance}, with broad applications in industrial automation and medical robotics~\citep{kong2025neural}. Unlike conventional tracking controllers that primarily minimize trajectory errors~\citep{zhang2024adaptive, yan2024observer}, impedance control regulates the dynamic relationship between motion and external forces~\citep{hogan1985impedance, nazmara2020exponentially}, thereby improving safety, robustness, and adaptability in contact-rich and uncertain environments.
Since robots often operate at high speeds in environments with unknown dynamics, unintended collisions are inevitable~\citep{nazmara2020exponentially}. Such events can be costly and hazardous, particularly in industrial settings involving valuable equipment and human operators. Impedance control plays a critical role in mitigating these risks by shaping the robot’s interaction behavior without relying on cameras or explicit environment modeling. By adjusting mechanical impedance in joint or task space, robots can absorb disturbances, handle uncertainties, and limit excessive contact forces. \\
\begin{figure*}[t!]
\begin{center}
\includegraphics[width=1\linewidth]{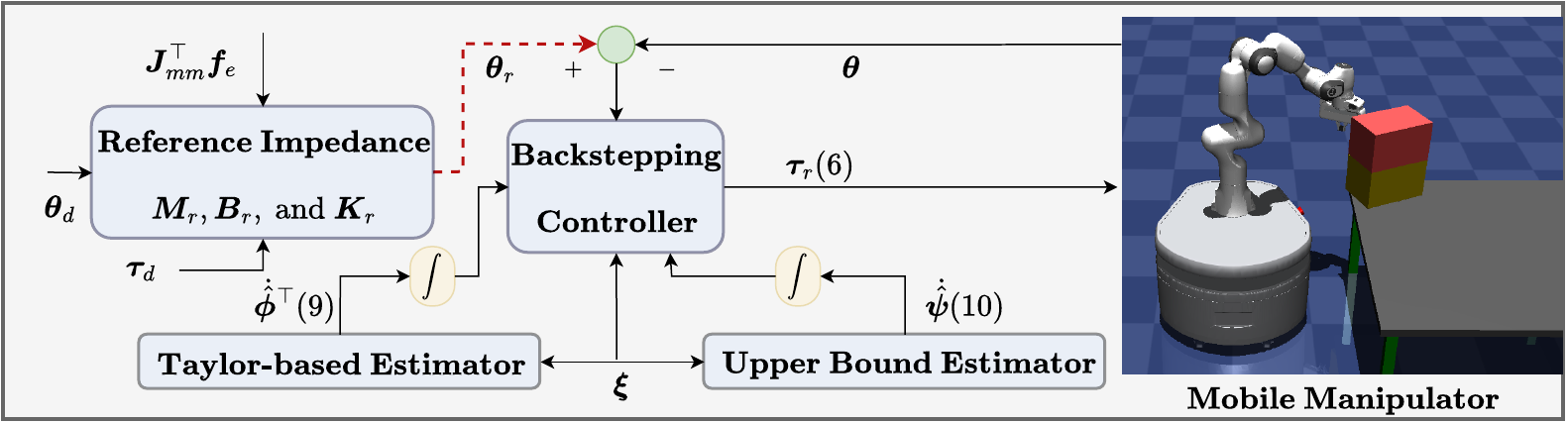}    
\caption{Overall block diagram of the proposed control strategy. A model reference impedance defines the desired inner-loop behavior. A Taylor series–based estimator and a force upper bound estimator in the backstepping control handle uncertainties and manage unwanted collisions in the WCMM's and fixed-base manipulators' joint space.}

\label{Fig:01}
\end{center}
\end{figure*}
Wheeled mobile manipulators (WMMs)~\citep{li2007adaptive, zhou2022coupled}, which integrate a robotic arm with a mobile base, can offer significant advantages over stationary robotic arms~\citep{zhang2024adaptive, ahmadi2023adaptive} or mobile platforms without a manipulator~\citep{yan2024observer, dacs2025robust}. Combining both mobility and manipulation capabilities in a single system extends the operational workspace, increases flexibility, and enables the execution of complex tasks that are unattainable for fixed-base manipulators or mobile platforms alone. In collaborative settings, wheeled coupled mobile manipulators~\citep{zhou2022coupled} are capable of working together to manipulate large or unwieldy objects, execute synchronized assembly tasks, and adapt to changing environments. \\
Most research on impedance control has primarily focused on fixed-base manipulators with relatively few studies addressing mobile manipulators, and even fewer control approaches that can be implemented on both mobile and fixed-base manipulators without modifying the control structure, while accounting for the robot's configuration and dynamics. For example, \citet{chien2004adaptive} proposed an adaptive impedance control approach using function approximation in a regressor-free design. Similarly, \citet{zhang2018master} developed a master--slave adaptive neural network framework for underwater manipulators, and \citet{kong2025neural} introduced a neural network--based optimal impedance control method for soft environmental interactions. \citet{huo2021model} presented a model-free adaptive impedance strategy for autonomous sanding, while \citet{zhu2025active} proposed an active-impedance-based adaptive locomotion method for bionic hexapod robots.
Although studies on mobile manipulators remain scarce, existing implementations often rely on simplifying assumptions. For instance, in~\citep{souzanchi2017robust}, a SCARA manipulator was considered, with negligible third-joint coupling, and impedance control was applied only along \(z\). Other common simplifications include modeling the environment as a linear spring or assuming zero desired inertia, which restrict\textcolor{blue}{s} the flexibility and generality of the impedance behavior.
An impedance control strategy that operates independently of the robot’s dynamic parameters, enabling seamless deployment on both coupled mobile manipulators and fixed‑base manipulators, represents a practical and versatile design with straightforward implementation. \\
 In comparison to the impedance control methods discussed above, our proposed approach is simpler in both design and implementation, and can be deployed on mobile as well as fixed-base manipulators. More specifically, our method adopts a torque control strategy within the impedance control framework. Unlike impedance control schemes based on voltage control~\citep{souzanchi2017robust,nazmara2020exponentially, ahmadi2023adaptive}, the torque-based approach preserves a direct correspondence with the system’s physical dynamics. This results in improved accuracy, enhanced robustness against external disturbances, and superior trajectory tracking performance. \\
Incorporating direct force feedback within the inner control loop can lead to instability and difficulties during collisions. Unlike previous approaches~\citep{chien2004adaptive, zhang2018master, huo2021model}, our proposed method does not rely on a force sensor directly integrated into the torque control loop and does not require the nominal values of the robot’s dynamic parameters. Furthermore, compared to the regressor-based design presented in~\citep{zhang2018master}, the proposed approach offers greater
implementation flexibility
 for robots with different configurations. \\

Unlike learning-based or RL-based control architectures, which often rely on offline training, extensive datasets, or computationally intensive policy updates, the proposed approach performs closed-form parameter estimation fully online, without prior data, and with significantly lower computational requirements. This distinguishes our framework from model-based actor--critic methods~\citep{zhao2022model}, whose dependence on accurate system models and substantial training effort may limit their suitability for real-time implementation in contact-rich scenarios. \\
Recent RL formulations have demonstrated notable adaptability in dynamic and uncertain environments, yet managing uncertainty in a stable and computationally efficient manner can remain challenging. For instance, the model-based actor--critic strategy presented in~\cite{yao2023model} ensures asymptotic Lyapunov stability under soft disturbance assumptions, which are well suited to the hydraulic systems studied therein but less aligned with the abrupt and nonsmooth forces characteristic of impedance-control settings. Data-driven RL approaches~\cite{yao2024data} provide improved flexibility for handling complex uncertainties; however, their training demands, exploration requirements, and safety considerations may introduce practical difficulties when deploying them on physical robotic platforms with strict real-time constraints. \\
In contrast, the proposed robust adaptive controller avoids trial-and-error learning, guarantees immediate online adaptation, and ensures practically finite-time stability, offering advantages in safety-critical interaction tasks. These properties enable its application to high-DOF robotic systems while maintaining predictable behavior during collisions and interactions with unstructured environments. \\
Furthermore, the Taylor-series-based estimator employed in our framework can be interpreted as a lightweight local model approximation of the unknown Euler--Lagrange dynamics. This viewpoint relates our approach to recent model--data hybrid strategies~\cite{yao2024modeldata}, which integrate analytical dynamics with learning-based components to improve modeling accuracy. However, unlike hybrid architectures that typically require neural network training or data accumulation, the proposed estimator operates entirely online, preserves a linear-in-parameters structure, and maintains formal Lyapunov-based guarantees. These characteristics make the method particularly suitable for real-time and safety-critical robotic applications, where computational efficiency, robustness, and immediate adaptability are essential.

In this context, developing robust impedance control strategies for high-degree-of-freedom (DoF) robots is essential to fully exploit their capabilities while ensuring safe, reliable, and high-performance operation in real-world environments. The main novelties and contributions of the proposed method are summarized as follows:

\begin{enumerate}
    \item Developed a robust adaptive impedance control framework for high-DOF robots, enabling operation in both joint and task spaces and effective interaction with unknown environments.
    \item Combined backstepping control with Lyapunov analysis to ensure semi-global finite-time stability and robust adaptive performance.
    \item Validated the approach through comprehensive simulations and experiments on robotic systems.
\end{enumerate}

The rest of the paper is organized as follows. Section~\ref{sec:Dynamics} presents the robot dynamics. Section~\ref{sec:Proposed} introduces the proposed control scheme and its stability analysis. Section~\ref{sec:Simulation & Experiments} reports simulation and experimental results. Finally, Section~\ref{sec:Conclusion} concludes the paper and outlines future work.
\section{Dynamic Modeling of Robotic Systems}\label{sec:Dynamics}
\indent We present the robot's dynamic equation in a general configuration, i.e., a collaborative mobile manipulator~\citep{li2007adaptive,zhou2022coupled}: 

\begin{equation}\label{Eq:01}
\boldsymbol{D}(\boldsymbol{\theta})\,\ddot{\boldsymbol{\theta}}
+ \boldsymbol{C}(\boldsymbol{\theta}, \dot{\boldsymbol{\theta}})\,\dot{\boldsymbol{\theta}}
+ \boldsymbol{G}(\boldsymbol{\theta}) + \boldsymbol{J}_{mm}^\top \boldsymbol{f}_{\mathrm{e}}
= \boldsymbol{\tau}_r + \boldsymbol{\tau}_{u}. 
\end{equation}
In \eqref{Eq:01}, \(\boldsymbol{D}(\boldsymbol{\theta}) \in \mathbb{R}^{n \times n}\) denotes the inertia matrix, 
\(\boldsymbol{C}(\boldsymbol{\theta}, \dot{\boldsymbol{\theta}}) \in \mathbb{R}^{n \times n}\) represents the Coriolis 
and centrifugal matrix, and \(\boldsymbol{G}(\boldsymbol{\theta}) \in \mathbb{R}^{n}\) is the gravity vector. 
Here, \(n\) is the total number of joints in the WCMM system, defined as \(n = n_b + n_m\), 
where \(n_b\) and \(n_m\) denote the number of joints in the mobile base and manipulator, respectively. The vectors \(\boldsymbol{\tau}_r \in \mathbb{R}^{n}\) and \(\boldsymbol{\tau}_{u} \in \mathbb{R}^{n}\) represent the joint control effort and the lumped uncertainties, which include unmodeled dynamics,  external disturbances, and friction. The term \(\boldsymbol{J}_{mm} \in \mathbb{R}^{6 \times n}\) denotes the full coupled Jacobian matrix of the WCMM, 
obtained by concatenating the Jacobian of the mobile base, \(\boldsymbol{J}_b \in \mathbb{R}^{6 \times n_b}\), 
and the Jacobian of the manipulator, \(\boldsymbol{J}_m \in \mathbb{R}^{6 \times n_m}\), 
such that \(\boldsymbol{J}_{mm} = [\boldsymbol{J}_b \ \boldsymbol{J}_m]\). The vector \(\boldsymbol{f}_{\mathrm{e}} \in \mathbb{R}^{6}\) represents the external force acting at the robot end-effector. 
In this context, \(\boldsymbol{\theta}\) denotes the vector of joint angular positions, defined as 
\(\boldsymbol{\theta} = \left[\boldsymbol{\theta}_b \ \boldsymbol{\theta}_m \right]^\top\), 
where \(\boldsymbol{\theta}_b\) and \(\boldsymbol{\theta}_m\) correspond to the angular positions of the mobile base and manipulator, respectively. 
The components of these vectors are given by 
\(\boldsymbol{\theta}_b = \left[\theta_R \ \theta_L \right]^\top \in \mathbb{R}^{2 \times 1}\), 
where \(\theta_R\) and \(\theta_L\) denote the angular positions of the right and left wheels, respectively. 
Similarly, 
\(\boldsymbol{\theta}_m = \left[\theta_1 \ \dots \ \theta_i \ \dots \ \theta_{n_m} \right]^\top \in \mathbb{R}^{n_m \times 1}\), 
where \(\theta_i\) represents the angular position of the \(i\)-th manipulator joint. 
For clarity, all matrices and vectors in this paper are denoted in boldface.

 Considering the variable \(\boldsymbol{\theta}_r\) as the output of the reference impedance model and defining the tracking error as \(\boldsymbol{e} = \boldsymbol{\theta}_r - \boldsymbol{\theta}\), the following auxiliary variables are introduced: \(\boldsymbol{\xi}_1 = \int (\dot{\boldsymbol{e}} + \mu \boldsymbol{e}) \, \mathrm{d}\tau\) and \(\boldsymbol{\xi}_2 = \dot{\boldsymbol{e}} + \mu \boldsymbol{e}\), where \(\mu\) is a positive definite design parameter. Based on these definitions, the following state-space representation can be derived
\begin{subequations}
\label{Eq:02}
\begin{align}
\dot{\boldsymbol{\xi}}_1 &={\boldsymbol{\xi}}_2, \label{eq:2A}\\
\Dot{\boldsymbol{\xi}}_2 &=-\hat{\boldsymbol{D}}^{-1}\boldsymbol{\tau}_r
 + \boldsymbol{H}. \label{eq:2B}
\end{align}
\end{subequations}
where \(\hat{\boldsymbol{D}}\) denotes the known nominal part of \(\boldsymbol{D}\), and \(\boldsymbol{H}\) represents the uncertainty function, defined as

 \begin{multline}
\label{Eq:03}
\boldsymbol{H} =
\ddot{\boldsymbol{\theta}}_r 
+ \mu \dot{\boldsymbol{e}} -\left(\boldsymbol{D}^{-1} - \hat{\boldsymbol{D}}^{-1}\right)\boldsymbol{\tau}_r \\
- \boldsymbol{D}^{-1} 
\left[ 
 \boldsymbol{\tau}_{u} 
- \boldsymbol{C} \dot{\boldsymbol{\theta}}
- \boldsymbol{G} 
- \boldsymbol{J}_{mm}^\top \boldsymbol{f}_{\mathrm{e}}
\right].
\end{multline}
Equation~\eqref{Eq:02} presents the new state-space representation of the overall tracking dynamics of the system, which will be used in the next section to derive the proposed impedance control scheme.

\section{Proposed Control  and Stability}\label{sec:Proposed}
In this section, we propose a low-level impedance control strategy based on a model-reference impedance control approach. 
\subsection{Proposed Control}
In this subsection, we present the proposed impedance control strategy for robotic systems.
 The control effort is designed to track the reference joint position \(\boldsymbol{\theta}_r\), which is the output of the reference impedance model, while compensating for system uncertainties and external disturbances. The reference impedance model dynamics, which generate the reference joint position signal \(\boldsymbol{\theta}_r\), are described by the following equation
 \begin{multline}
\label{Eq:04}
\ddot{\boldsymbol{\theta}}_r =
\boldsymbol{M}_r^{-1} \left( \boldsymbol{\tau}_{d} - \boldsymbol{J}_{mm}^\top \boldsymbol{f}_{\mathrm{e}} \right)
+ \ddot{\boldsymbol{\theta}}_{d} \\
- \boldsymbol{M}_r^{-1} \boldsymbol{B}_r \left( \dot{\boldsymbol{\theta}}_r - \dot{\boldsymbol{\theta}}_d \right)
- \boldsymbol{M}_r^{-1} \boldsymbol{K}_r \left( \boldsymbol{\theta}_r - \boldsymbol{\theta}_d \right),
\end{multline}

where \(\boldsymbol{M}_r \in \mathbb{R}^{n \times n}\), \(\boldsymbol{B}_r \in \mathbb{R}^{n \times n}\), and \(\boldsymbol{K}_r \in \mathbb{R}^{n \times n}\) are diagonal matrices representing the desired inertia, damping, and stiffness, respectively. The signal \(\boldsymbol{\theta}_d \in \mathbb{R}^{n}\) denotes the desired joint trajectory, and \(\boldsymbol{\tau}_{d} \in \mathbb{R}^{n}\) represents the desired joint torque for the mobile manipulator.

Before proposing the inner-loop control effort, we model the uncertainty function \(\boldsymbol{H}\) as follows
\begin{equation}
\label{Eq:05}
\boldsymbol{H} = \boldsymbol{\gamma}^\top \boldsymbol{\phi} + \boldsymbol{\vartheta},
\end{equation}
where \(\boldsymbol{\gamma}\) is the regressor, \(\boldsymbol{\phi}\) is the vector of actual parameters, and \(\boldsymbol{\vartheta}\) represents the unmodeled dynamics, which are assumed to be upper-bounded as \(|\vartheta_i| \le \psi_i\). The boundedness assumption~\citep{ahmadi2023adaptive} on the unmodeled dynamics is realistic and consistent with the simulation and experimental results, based on the physical interpretation of serially linked robot arms and the formulation in~\eqref{Eq:03}. We clarify that the uncertainty term $\boldsymbol{H}$ aggregates modeling errors, actuator limits, and interaction forces, all of which are bounded due to finite link masses, joint limits, and feasible contact forces. Consequently, with bounded reference trajectories, all closed-loop signals remain bounded, justifying the standard boundedness assumption used in the stability analysis.

\textbf{Remark 1:} For all expressions of the form $x^{q}$ with $0 < q < 1$ and $x$ possibly negative, the implementation uses the signed power function: $\operatorname{sign}(x)\,|x|^{q}$, as is standard for real-valued control inputs (see e.g., \citep{bhat2000finite}).

Then, to track the desired impedance command \(\boldsymbol{\theta}_r\), we propose the following control for each joint by taking into account Remark~1 in implementation as
\begin{multline}
\label{Eq:06}
\tau_{ri} = 
\hat{D}_i \Big[\xi_{1i} + k_{1i}\left(2l-1\right)\xi_{2i}\xi_{1i}^{2(l-1)}\Big]+\hat{D}_i k_{2i}\\\left(\xi_{2i} + k_{1i}\xi_{1i}^{2l-1}\right)^{2l-1} 
+ \hat{D}_i\Big[\hat{\tau}_i
+ \, \operatorname{sign}(\xi_{2i} + k_{1i} \xi_{1i}^{2l-1})\hat{\psi}_i  \Big],
\end{multline}


 where \(k_{1i}\) and \(k_{2i}\) are positive control gains, \(\hat{\psi}_i\) denotes the estimate of \(\psi_i\), \(l\) satisfies \(0 < l < 1\), and \(\hat{\tau}_i\) represents the output of the Taylor-based estimator~\citep{ahmadi2023adaptive}, defined by the following equation
\begin{equation}
\label{Eq:07}
\hat{\tau}_i =\boldsymbol{\gamma}_i^{\top}(t) \hat{\boldsymbol{\phi}}_i(t),
\end{equation}

where \(\hat{\boldsymbol{\phi}}_i\) is the estimate of \(\boldsymbol{\phi}_i\) and \(\boldsymbol{\gamma}_i\) is defined as
\begin{equation}
\label{Eq:08}
\boldsymbol{\gamma}_i(t)\textcolor{blue}{^\top} = 
\begin{bmatrix}
1, \;
\left( \int_{0}^{t} (\xi_{2i}(\tau) - \xi_{2i0}) d\tau \right)^{k}, \;
(\xi_{2i}(t) - \xi_{2i0})^{m}
\end{bmatrix},
\end{equation}
in which \( k = 1, \ldots, l_1 \) and \( m = 1, \ldots, l_2 \).
At the reference point \( \xi_{2i0} \), the combined \( l_1 \)-th and \( l_2 \)-th order Taylor polynomial terms are represented by \( \hat{\tau}_i \), which can be parameterized in a linear form as \( \hat{\tau}_i = \boldsymbol{\gamma}_i(t) \hat{\boldsymbol{\phi}}_i(t) \). Here, \( \hat{\boldsymbol{\phi}}_i(t) \in \mathbb{R}^{(l_1 + l_2 + 1) \times 1} \) denotes the vector of Taylor coefficients for \( H_i(t) \), and \( \boldsymbol{\gamma}_i(t)\textcolor{blue}{^\top} \in \mathbb{R}^{(l_1 + l_2 + 1) \times 1} \) is the regressor vector as previously defined.

 Then, we propose the parameter adaptation laws using the following equations
\begin{equation}\label{Eq:09}
\dot{\hat{\phi}}_i^{\top} = \rho_{\phi_i} \, (\xi_{2i} + k_{1i} \xi_{1i}^{2l-1})\boldsymbol{\gamma}_i^{\top} - \sigma_{\phi_i} \, \hat{\boldsymbol{\phi}}_i^{\top},
\end{equation}

\begin{equation}\label{Eq:10}
\dot{\hat{\psi}}_i = \rho_{\psi_i} \, \big\lvert \xi_{2i} + k_{1i} \xi_{1i}^{2l-1} \big\rvert - \sigma_{\psi_i} \, \hat{\psi}_i,
\end{equation}
in which \(\rho_{\phi_i}\), \(\sigma_{\phi_i}\), \(\rho_{\psi_i}\), and \(\sigma_{\psi_i}\) are positive constants.

The overall block diagram of the system is depicted in Figure~\ref{Fig:01}. As shown in this figure, the desired impedance command \(\boldsymbol{\theta}_r\) is first generated by a reference model and then provided to the inner loop, where a backstepping controller is used. The parameters within this backstepping controller are adjusted using two adaptive units. This backstepping controller is derived based on Lyapunov's theorem to ensure the stability of the system.

\subsection{Stability Analysis}
In this subsection, we analyze the stability of the proposed control scheme using Lyapunov theory. To formalize the stability properties, we present the following theorem, which is subsequently proven.
\begin{thm}
Consider the system described by~\eqref{Eq:01}, controlled according to the law in~\eqref{Eq:06}, with the Taylor-based estimator designed in~\eqref{Eq:07} and adaptation parameters specified by~\eqref{Eq:09} and~\eqref{Eq:10}, and with modeling uncertainty~$\boldsymbol{H}$ as in~\eqref{Eq:05}. Suppose the desired trajectory~$\boldsymbol{\theta}_d$ is sufficiently smooth, so that its time derivatives exist up to the required order, and assume that the unmodeled dynamics error~$\boldsymbol{\vartheta}$ is upper bounded. By substituting the control effort and adaptation parameters, letting~$\boldsymbol{\xi}_1$ and~$\boldsymbol{\xi}_2$ be the Taylor bases, and applying the system dynamics~\eqref{Eq:02} together with the reference model~\eqref{Eq:04}, it can be established that the closed-loop system is semi-globally, practically, finite-time stable.
\end{thm}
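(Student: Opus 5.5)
The plan is a two-step backstepping Lyapunov argument, carried out joint by joint in the error coordinates of \eqref{Eq:02}. It ends with a differential inequality of the form $\dot V\le -c_1V^{\beta}+c_2$ with $0<\beta<1$, to which the standard semi-global practical finite-time stability lemma (in the spirit of \citep{bhat2000finite}) is applied.

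\textbf{Step 1 (first layer).} Use $V_1=\tfrac12\sum_i \xi_{1i}^2$ and treat $\xi_{2i}$ as a virtual input, with virtual control $\alpha_i=-k_{1i}\xi_{1i}^{2l-1}$. Introduce the backstepping error $z_i=\xi_{2i}+k_{1i}\xi_{1i}^{2l-1}$, which is exactly the signal appearing in \eqref{Eq:06}, \eqref{Eq:09} and \eqref{Eq:10}. This gives
\[
\dot V_1=\sum_i\bigl(-k_{1i}\xi_{1i}^{2l}+\xi_{1i}z_i\bigr).
\]

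\textbf{Step 2 (second layer).} Differentiate $z_i$ using \eqref{eq:2B} and the model \eqref{Eq:05}:
\[
\dot z_i=-\hat D_i^{-1}\tau_{ri}+\boldsymbol{\gamma}_i^\top\boldsymbol{\phi}_i+\vartheta_i+k_{1i}(2l-1)\xi_{1i}^{2l-2}\xi_{2i}.
\]
Substituting \eqref{Eq:06} cancels the cross term $\xi_{1i}z_i$ and the derivative of the virtual control. What remains is
\[
\dot z_i=-\xi_{1i}-k_{2i}z_i^{2l-1}-\boldsymbol{\gamma}_i^\top\tilde{\boldsymbol{\phi}}_i+\vartheta_i-\operatorname{sign}(z_i)\hat\psi_i,
\qquad \tilde{\boldsymbol\phi}_i=\hat{\boldsymbol\phi}_i-\boldsymbol\phi_i .
\]
(Signs are to be fixed so that the term $\hat D_i\hat\tau_i$ matches the sign convention of \eqref{eq:2B}.) The term $\xi_{1i}^{2(l-1)}$ is singular at $\xi_{1i}=0$ because $l<1$. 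I would restrict to $l\in(1/2,1)$, use the signed-power convention of Remark~1, and treat the singularity only as an implementation issue. This is the place where ``semi-global'' enters: the analysis is carried out on an arbitrary compact set of initial conditions, on which $\boldsymbol{\gamma}_i$ and the reference signals from \eqref{Eq:04} stay bounded.

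\textbf{Step 3 (composite Lyapunov function).} Take
\[
V=V_1+\tfrac12\sum_i\Bigl(z_i^2+\rho_{\phi_i}^{-1}\tilde{\boldsymbol\phi}_i^\top\tilde{\boldsymbol\phi}_i+\rho_{\psi_i}^{-1}\tilde\psi_i^2\Bigr),
\qquad \tilde\psi_i=\hat\psi_i-\psi_i .
\]
The adaptation laws \eqref{Eq:09}--\eqref{Eq:10} cancel the cross terms $z_i\boldsymbol\gamma_i^\top\tilde{\boldsymbol\phi}_i$ and $|z_i|\tilde\psi_i$. The bound $|\vartheta_i|\le\psi_i$ absorbs the residual, since $z_i\vartheta_i-|z_i|\psi_i\le0$. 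This leaves
\[
\dot V\le\sum_i\Bigl(-k_{1i}\xi_{1i}^{2l}-k_{2i}z_i^{2l}-\tfrac{\sigma_{\phi_i}}{\rho_{\phi_i}}\tilde{\boldsymbol\phi}_i^\top\hat{\boldsymbol\phi}_i-\tfrac{\sigma_{\psi_i}}{\rho_{\psi_i}}\tilde\psi_i\hat\psi_i\Bigr).
\]
For the leakage terms, Young's inequality gives $-\tilde x\hat x\le-\tfrac12\tilde x^2+\tfrac12x^2$. Then the inequality $a\le a^{l}+1$-type bound, i.e.\ $(\tfrac12\tilde x^2)^{l}\le\tfrac12\tilde x^2+(1-l)l^{l/(1-l)}$, converts quadratic leakage into fractional powers. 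Combining with $\sum a_j^{l}\ge(\sum a_j)^{l}$ yields $\dot V\le-c_1V^{l}+c_2$. Here $c_1=\min\{2^lk_{1i},2^lk_{2i},\sigma\text{-dependent constants}\}$, and $c_2$ collects the $\|\boldsymbol\phi_i\|^2,\psi_i^2$ terms and the constants from the power inequalities.

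\textbf{Step 4 (conclusion).} Invoke the practical finite-time lemma. For any $0<\varkappa<1$, $V$ converges in finite time $T\le V(0)^{1-l}/\bigl(c_1\varkappa(1-l)\bigr)$ to the set $V^{l}\le c_2/\bigl((1-\varkappa)c_1\bigr)$. This residual set can be made small by increasing $k_{1i},k_{2i}$ relative to $\sigma$. Hence $\xi_1,\xi_2$, and therefore $\boldsymbol e=\boldsymbol\theta_r-\boldsymbol\theta$, are practically finite-time stable. Boundedness of $\boldsymbol\theta_r$ follows from \eqref{Eq:04} being a stable linear filter, given positive-definite $\boldsymbol M_r,\boldsymbol B_r,\boldsymbol K_r$ and bounded $\boldsymbol f_e,\boldsymbol\tau_d$.

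\textbf{Main obstacle.} I expect Step 2/3 to be the hardest part. First, exact cancellation requires $\xi_{1i}$ to appear with coefficient matching $\hat D_i^{-1}\hat D_i$. Second, and more seriously, the assumption that $\boldsymbol\phi$ and $\psi$ are constants is needed for the adaptation cross-term cancellation to be valid, while $\boldsymbol\gamma_i$ must be bounded on the compact set. I would handle this by declaring the Taylor coefficients constant on the operating region and absorbing the Taylor remainder into $\vartheta_i$.
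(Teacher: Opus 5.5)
Your proposal is correct and follows the paper's proof essentially step for step. It uses the same backstepping Lyapunov function (the paper keeps the per-joint $V_i$ of \eqref{Eq:11} instead of summing), the same cancellations via \eqref{Eq:06}, \eqref{Eq:09}, \eqref{Eq:10} and $|\vartheta_i|\le\psi_i$, Young's inequality on the leakage terms, Lemmas~\ref{lemma:inequality2} and~\ref{lemma:inequality3} with $p=l^{l/(1-l)}$ to reach $\dot V_i+\rho_i V_i^{l}\le c_i$, and the practical finite-time lemma; your factors $2^l k_{1i}$, $2^l k_{2i}$ are in fact the accurate ones where the paper writes $2k_{1i}$, $2k_{2i}$ in \eqref{Eq:21}. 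The one slip is your closing claim that the residual set can be made small by raising $k_{1i},k_{2i}$ relative to $\sigma$: this helps only until $2^l k_{1i},2^l k_{2i}$ exceed the $\sigma$'s, after which $c_1$ is fixed by the $\sigma$'s and $c_2/c_1$ is governed by $\|\boldsymbol{\phi}_i\|^2/\rho_{\phi_i}$, $\psi_i^2/\rho_{\psi_i}$ and $(1-l)l^{l/(1-l)}$ instead, though the theorem does not need this claim.
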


\begin{pf}
The proof proceeds by defining the following Lyapunov candidate function for the second subsystem using the backstepping technique
\begin{multline}
\label{Eq:11} 
 V_i = 0.5\left(\xi_{1i}\right)^2 + 0.5\left(\xi_{2i} + k_{1i}\xi_{1i}^{2l-1}\right)^2 \\+ 0.5\left( \tilde{\boldsymbol{\phi}}_i^\top\tilde{\boldsymbol{\phi}}_i/\rho_{{\phi}_i}+\tilde{\psi}_i^2/\rho_{{\psi}_i}\right)    
\end{multline} 
 By differentiating~\eqref{Eq:11}, substituting the control effort from~\eqref{Eq:06}, modeling the uncertainty as in~\eqref{Eq:05}, and utilizing the system dynamics in~\eqref{Eq:02},  we obtain
 \begin{multline}
\label{Eq:12}
\dot{V}_i = \xi_{1i}\xi_{2i} - \left(\xi_{2i} + k_{1i}\xi_{1i}^{2l-1}\right)\xi_{1i} - k_{2i}\left(\xi_{2i} + k_{1i}\xi_{1i}^{2l-1}\right)^{2l}\\
-\left(\xi_{2i} + k_{1i}\xi_{1i}^{2l-1}\right)\left(  \boldsymbol{\gamma}_i^{\top} \hat{\boldsymbol{\phi}}_i 
+ \hat{\psi}_i \, \operatorname{sign}(\xi_{2i} + k_{1i} \xi_{1i}^{2l-1}) \right) \\+ \left(\xi_{2i} + k_{1i}\xi_{1i}^{2l-1}\right)\left(\boldsymbol{\gamma}_i^\top \boldsymbol{\phi}_i + \vartheta_i\right) - \dot{\hat{\boldsymbol{\phi}}}_i^\top\tilde{\boldsymbol{\phi}}_i/\rho_{{\phi}_i}-\dot{\hat{\psi}}_i\tilde{\psi}_i/\rho_{{\psi}_i}
\end{multline}

 By defining \(\tilde{\boldsymbol{\phi}}_i = \boldsymbol{\phi}_i - \hat{\boldsymbol{\phi}}_i\) and \(\tilde{\psi}_i = \psi_i - \hat{\psi}_i\), and under the assumption that \(|\vartheta_i| \leq \psi_i\), after some manipulations and simplifications, we can rewrite~\eqref{Eq:12} as follows
 \begin{multline}
\label{Eq:13}
\dot{V}_i\leq -k_{1i}\xi_{1i}^{2l}-k_{2i}\left(\xi_{2i} + k_{1i}\xi_{1i}^{2l-1}\right)^{2l} + \left(\xi_{2i} + k_{1i}\xi_{1i}^{2l-1}\right)\boldsymbol{\gamma}_i^\top \tilde{\boldsymbol{\phi}}_i\\
|\xi_{2i} + k_{1i}\xi_{1i}^{2l-1}|\tilde{\psi}_i -  \dot{\hat{\boldsymbol{\phi}}}_i^\top\tilde{\boldsymbol{\phi}}_i/\rho_{{\phi}_i}-\dot{\hat{\psi}}_i\tilde{\psi}_i/\rho_{{\psi}_i}
\end{multline}
 
 By using the adaptation laws given in~\eqref{Eq:09} and~\eqref{Eq:10}, we can rewrite~\eqref{Eq:13} as the following inequality
 \begin{multline}
\label{Eq:14}
\dot{V}_i\leq -k_{1i}\xi_{1i}^{2l}-k_{2i}\left(\xi_{2i} + k_{1i}\xi_{1i}^{2l-1}\right)^{2l} + \left(\xi_{2i} + k_{1i}\xi_{1i}^{2l-1}\right)\boldsymbol{\gamma}_i^\top \tilde{\boldsymbol{\phi}}_i\\
|\xi_{2i} + k_{1i}\xi_{1i}^{2l-1}|\tilde{\psi}_i -  \left(\rho_{\phi_i} \, (\xi_{2i} + k_{1i} \xi_{1i}^{2l-1})\boldsymbol{\gamma}_i^{\top} - \sigma_{\phi_i} \, \hat{\boldsymbol{\phi}}_i^{\top}\right)\tilde{\boldsymbol{\phi}}_i/\rho_{\phi_i}\\-\left(\rho_{\psi_i} \, \big\lvert \xi_{2i} + k_{1i} \xi_{1i}^{2l-1} \big\rvert - \sigma_{\psi_i} \, \hat{\psi}_i\right)\tilde{\psi}_i/\rho_{{\psi}}
\end{multline}

Then, after some simplifications, the inequality~\eqref{Eq:14} becomes
 \begin{multline}
\label{Eq:15}
\dot{V}_i\leq -k_{1i}\xi_{1i}^{2l}-k_{2i}\left(\xi_{2i} + k_{1i}\xi_{1i}^{2l-1}\right)^{2l} + \left(\sigma_{\phi_i}/\rho_{\phi_i}\right)\hat{\boldsymbol{\phi}}_i^{\top}\tilde{\boldsymbol{\phi}}_i \\ + \left(\sigma_{\psi_i}/\rho_{\psi_i}\right)\hat{\psi}_i\tilde{\psi}_i
\end{multline}
We then employ the following inequalities
\begin{equation}\label{Eq:16}
\hat{\boldsymbol{\phi}}_i^{\top}\tilde{\boldsymbol{\phi}}_i \leq 0.5\boldsymbol{\phi}_i^{\top}\boldsymbol{\phi}_i - 0.5   \tilde{\boldsymbol{\phi}}_i^{\top}\tilde{\boldsymbol{\phi}}_i
\end{equation}
\begin{equation}\label{Eq:17}
\hat{\psi}_i\tilde{\psi}_i \leq 0.5 \psi_i^2-0.5\tilde{\psi}_i^2
\end{equation}
By substituting \eqref{Eq:16} and \eqref{Eq:17} into \eqref{Eq:15} we have
 \begin{multline}
\label{Eq:18}
\dot{V}_i\leq -k_{1i}\xi_{1i}^{2l}-k_{2i}\left(\xi_{2i} + k_{1i}\xi_{1i}^{2l-1}\right)^{2l} - \left(\sigma_{\phi_i}/2\rho_{\phi_i}\right)\tilde{\boldsymbol{\phi}}_i^{\top}\tilde{\boldsymbol{\phi}}_i  \\- \left(\sigma_{\psi_i}/2\rho_{\psi_i}\right)\tilde{\psi}_i^2 + \left(\sigma_{\phi_i}/2\rho_{\phi_i}\right)\boldsymbol{\phi}_i^{\top}\boldsymbol{\phi}_i +  \left(\sigma_{\psi_i}/2\rho_{\psi_i}\right)\psi_i^2
\end{multline}
By applying Lemma~\ref{lemma:inequality2} from Appendix~\ref{app:lemmas} and considering \(0 < l < 1\), with the substitutions \(a = 1 - l\), \(b = l\), \(q_1 = 1\), \(q_2 = \tilde{\phi}_{ij}^2\), and \(p = l^{\frac{l}{1-l}}\), we deduce that
 \begin{equation}\label{Eq:19}
- \tilde{\boldsymbol{\phi}}_i^{\top}\tilde{\boldsymbol{\phi}}_i\leq \left(1-l\right)p-\left(\tilde{\boldsymbol{\phi}}_i^{\top}\tilde{\boldsymbol{\phi}}_i\right)^{l}
 \end{equation}
Similarly, the following result can also be obtained
 \begin{equation}\label{Eq:20}
- \tilde{\psi}_i^2\leq \left(1-l\right)p-\left(\tilde{\psi}_i^2\right)^{l} 
 \end{equation}

Then we choose \(\rho_i\) as the following equation
 \begin{equation}\label{Eq:21}
\rho_i=\mathrm{min}\left\{2k_{1i},2k_{2i},\sigma_{\phi i},\sigma_{\psi i}\right\}
 \end{equation}
Finally, by applying Lemma~\ref{lemma:inequality3} from Appendix~\ref{app:lemmas}, together with \eqref{Eq:19}, \eqref{Eq:20}, and \eqref{Eq:21}, inequality \eqref{Eq:18} can be reformulated as follows
\begin{equation}
\label{Eq:22}
\dot{V}_i + \rho_i V_i^{l_i} \leq c_i
\end{equation}
 where \(c_i\) is defined as the following
\begin{multline}
\label{Eq:23}
c_i =\left(\sigma_{\phi_i}/\left(2\rho_{\phi_i}\right) + \sigma_{\psi_i}/\left(2\rho_{\psi_i}\right)\right)\left(1-l\right)p + \\\left(\sigma_{\phi_i}/2\rho_{\phi_i}\right)\boldsymbol{\phi}_i^{\top}\boldsymbol{\phi}_i +  \left(\sigma_{\psi_i}/2\rho_{\psi_i}\right)\psi_i^2
\end{multline}
 
 Inequality~\eqref{Eq:22} implies that the system described in \eqref{Eq:01} is semi-globally practically finite-time stable~\citep{nazmara2024safe}, 
and the tracking error converges to the desired impedance model defined in \eqref{Eq:04}.
\end{pf}
\section{Simulation and Experimental Results}
\label{sec:Simulation & Experiments}
This section presents simulations and experiments validating the effectiveness, robustness, and safety of the proposed control scheme, where joint-space impedance control ensures full-body robot protection.
\subsection{Simulation Setup and Baseline Scenarios on WMMs}
Here, we designed two scenarios to evaluate the system’s performance under collision, representative of situations commonly encountered by mobile manipulators operating in dynamic environments. These scenarios involve collisions with a stiff object with different masses, where the equilibrium point—defined as the contact point when contact forces begin to appear— may change, thereby making the interaction behavior more realistic. 
Specifically, a mobile manipulator with nine DoF interacts with an environment consisting of a red box placed on top of a yellow box on a table (see Fig.~\ref{Fig:01}). The mass of the red box is chosen between \(m_r = 2.5~\mathrm{kg}\), \(5~\mathrm{kg}\), and \(20~\mathrm{kg}\), and the friction coefficient is \(0.9\). The mass of the yellow box \(m_y\) is \(25~\mathrm{kg}\) and it is fixed to the table. 

We compare the performance of a conventional PD control scheme (Scenario~A) with our robust adaptive impedance control scheme (Scenario~B). Our control approach is applied simultaneously to all joints of the mobile manipulator, allowing the coupling effects—particularly those arising after collisions—to be effectively monitored and analyzed to ensure the overall safety of the robot.

\emph{Scenario A: PD Control on WMMs in MuJoCo}

In this scenario, the robot interacts with the environment while operating under a conventional PD control scheme. 
The objective is to examine the system behavior when trajectory tracking is performed without any awareness of contact forces or collisions. 
Under PD control, the robot strictly follows the predefined desired trajectory, which is intentionally designed to include an obstacle, resulting in an unintended collision. 
Upon impact, the PD controller interprets the resulting contact force as an external disturbance and continues attempting to follow the trajectory. 
The proportional and derivative gains are defined as 
\(\boldsymbol{k}_p = \begin{bmatrix}
60 & 60 & 480 & 600 & 480 & 480 & 420 & 480 & 28.8
\end{bmatrix}^{\top}\) and 
\(\boldsymbol{k}_d = \begin{bmatrix}
24 & 24 & 192 & 96 & 192 & 192 & 192 & 156 & 14.4
\end{bmatrix}^{\top}\), respectively. Figures~\ref{Fig:02}(A.1)–(A.3) illustrate the state of the environment after the collision for three different red-box masses, along with the corresponding contact force for each case. In Scenarios~A.1 and~A.2, the red box falls due to the impact, whereas in Scenario~A.3, the box is pushed; however, the robot lacks sufficient actuation power to sustain the motion and overcome friction. Overall, such situations, in which the robot experiences unexpected collisions without a dedicated interaction-control strategy, can significantly compromise performance and even lead to mechanical damage.

\begin{figure*}[ht]
\centering
\begin{minipage}[t]{0.3\textwidth}
  \centering
  \textbf{Scenario A.1}
\end{minipage}%
\begin{minipage}[t]{0.3\textwidth}
  \centering
  \textbf{Scenario A.2}
\end{minipage}%
\begin{minipage}[t]{0.3\textwidth}
  \centering
  \textbf{Scenario A.3}
\end{minipage}%

\vspace{-0.1em}

\begin{minipage}[t]{0.3\textwidth}
  \centering
  \includegraphics[width=\linewidth,height=3cm]{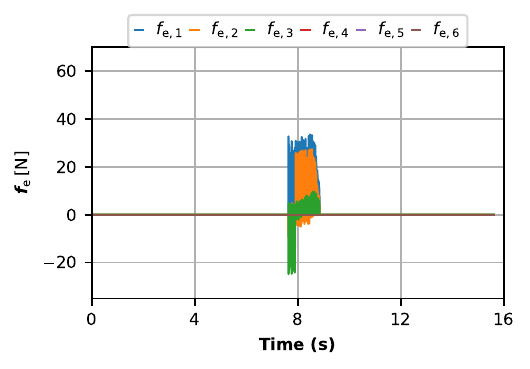}\\[0.5em]
  
\includegraphics[width=0.7\linewidth,height=3cm]{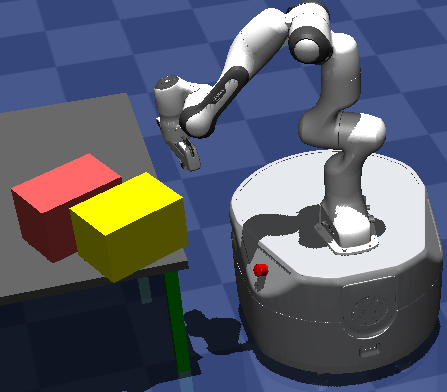}

\end{minipage}%
\begin{minipage}[t]{0.3\textwidth}
  \centering
  \includegraphics[width=\linewidth,height=3.05cm]{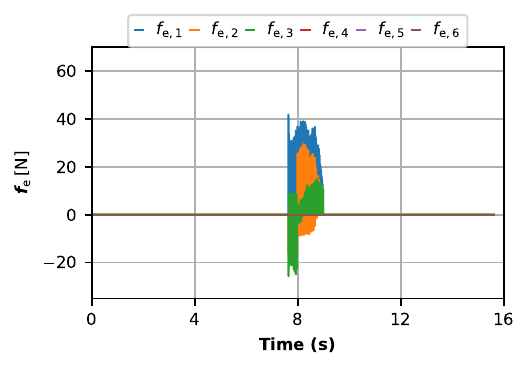}\\[0.5em]
  \includegraphics[width=0.7\linewidth,height=3cm]{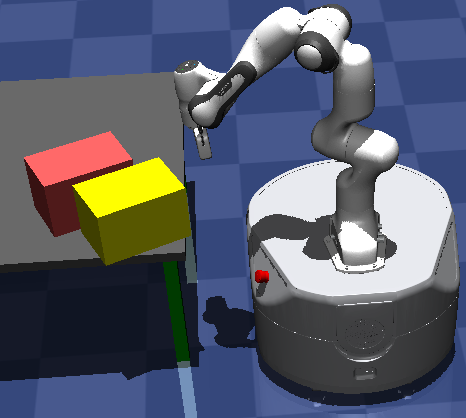}
\end{minipage}%
\begin{minipage}[t]{0.3\textwidth}
  \centering
  \includegraphics[width=\linewidth,height=3cm]{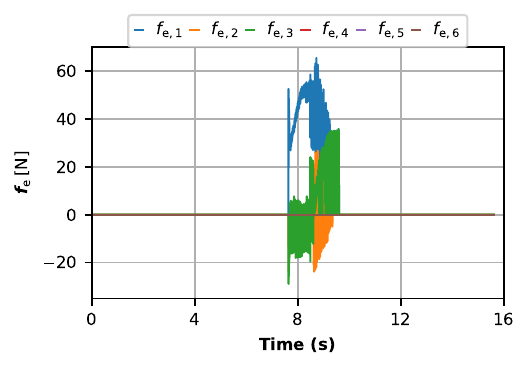}\\[0.5em]
  \includegraphics[width=0.7\linewidth,height=3cm]{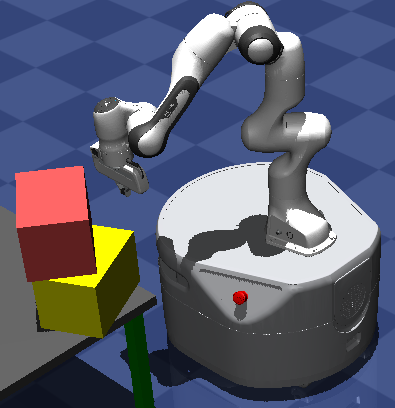}
\end{minipage}

\caption{\textbf{(A.1)--(A.3):} Snapshots and force profiles for collisions under PD control with a red box (\(m_r = 2.5,~5,~20~\mathrm{kg}\)). The robot, unaware of contact, follows its trajectory, causing impacts that knock over or push the box. The contact force increases with heavier boxes, highlighting the risks of damage and instability without interaction-aware control. For lighter boxes, PD control pushes the box until it falls, while the 20 kg box moves without knocking over.}

\label{Fig:02}
\end{figure*}

\emph{Scenario B: Proposed RABIC on WMMs in MuJoCo}

In this scenario, our proposed robust adaptive impedance control scheme is implemented across all joint spaces.  
Figures~\ref{Fig:03}(B.1)–(B.3) illustrate the state of the environment after the collision for three different red-box masses, similar to Scenario-A, along with the corresponding force vectors.
\begin{figure*}[ht]
\centering
\begin{minipage}[t]{0.3\textwidth}
  \centering
  \textbf{Scenario B.1}
\end{minipage}%
\begin{minipage}[t]{0.3\textwidth}
  \centering
  \textbf{Scenario B.2}
\end{minipage}%
\begin{minipage}[t]{0.3\textwidth}
  \centering
  \textbf{Scenario B.3}
\end{minipage}%

\vspace{-0.1em}

\begin{minipage}[t]{0.30\textwidth}
  \centering
  \includegraphics[width=\linewidth,height=3cm]{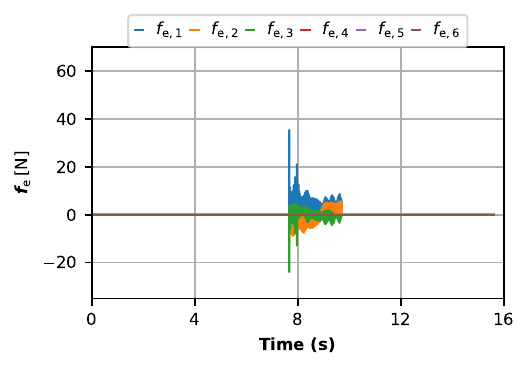}\\[0.5em]
  \includegraphics[width=0.7\linewidth,height=3cm]{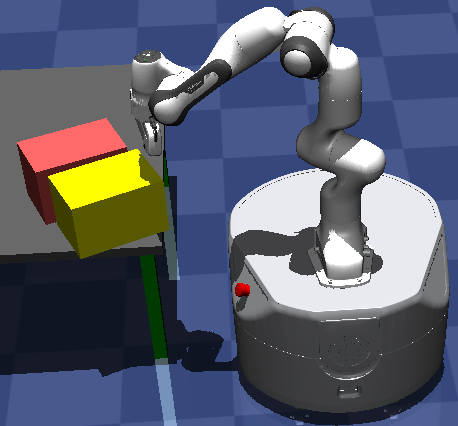}
\end{minipage}%
\begin{minipage}[t]{0.3\textwidth}
  \centering
  \includegraphics[width=\linewidth,height=3cm]{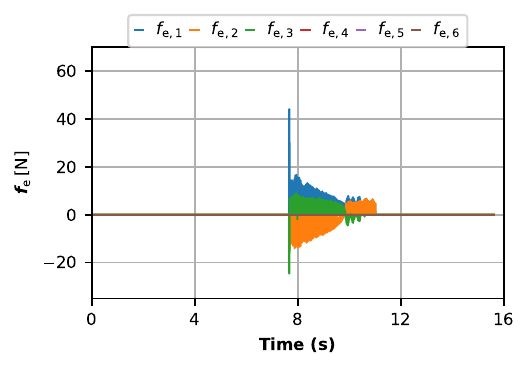}\\[0.5em]
  \includegraphics[width=0.7\linewidth,height=3cm]{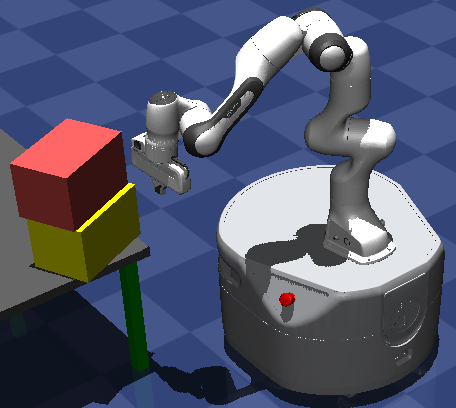}
\end{minipage}%
\begin{minipage}[t]{0.3\textwidth}
  \centering
  \includegraphics[width=\linewidth,height=3cm]{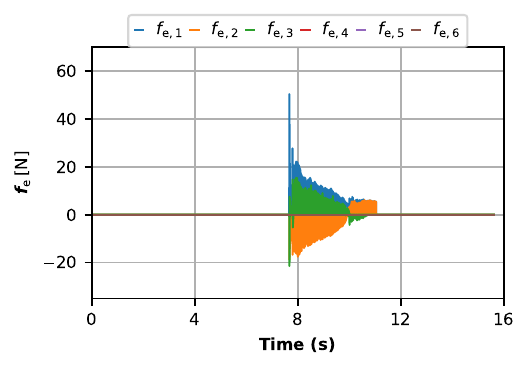}\\[0.5em]
  \includegraphics[width=0.7\linewidth,height=3cm]{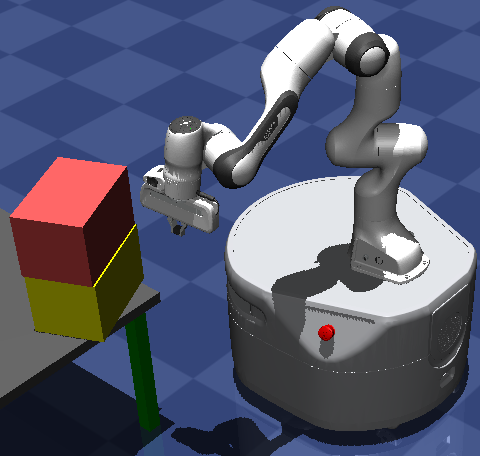}
\end{minipage}

\caption{\textbf{(B.1)--(B.3):} Snapshots and force profiles for three collision scenarios under the proposed adaptive impedance control, where the robot impacts a red box (\(m_r = 2.5,~5,~20~\mathrm{kg}\)). Trajectory adaptation in response to collisions rapidly attenuates contact forces. For lighter boxes, the system behaves similarly to a PD controller, while for heavier boxes, it limits force growth by relaxing trajectory tracking to ensure safety and stability.}
\label{Fig:03}
\end{figure*}
Our control scheme results in an initial contact force peak due to the sudden impact, similar to the PD control case but with a significantly smaller subsequent force.  
After the collision, the impedance controller adjusts the robot's motion by following the reference impedance signal, \(\boldsymbol{\theta}_r\),  
temporarily deviating from the desired trajectory \(\boldsymbol{\theta}_d\), to generate joint torques in directions that reduce the contact forces.  
As a result, the contact forces rapidly decrease and eventually approach zero. The signal \(\boldsymbol{\theta}_r\) serves as a reference trajectory that not only conveys time-dependent motion information  
but also encodes the desired mechanical impedance of the robot, thereby improving the safety of physical interactions. In both scenarios, the desired trajectory is given by \( \theta_{d,1}(t) = \theta_{d,2}(t) = 8 \sin\left( 0.2\pi t/t_f \right)\), \\ \(\theta_{d,3}(t) = 1.2 \sin\left( 0.3\pi t/t_f \right)\), and the rest of the joints are held at constant positions \(\theta_{d,4}(t) = 0\), \(\theta_{d,5}(t) = 1.7\), \(\theta_{d,6}(t) = -2.35\), \(\theta_{d,7}(t) = 0\), \(\theta_{d,8}(t) = 2.35\), and \(\theta_{d,9}(t) = 0.23\) for \(t_f = 2.5\). The timestep of the simulation is $1ms$.

The desired impedance parameters are set to \(\boldsymbol{M}_r = \boldsymbol{I}_n\), \(\boldsymbol{B}_r = 20\boldsymbol{I}_n\), \(\boldsymbol{K}_r = \boldsymbol{I}_n\), and \(\boldsymbol{\tau}_d = 0\). The control gains are chosen as \(\boldsymbol{k}_1 = \operatorname{diag}(5.76,\, 5.76,\, 24,\, 3.6,\, 24,\, 24,\, 24,\, 24,\, 24)\) and \(\boldsymbol{k}_2 = \operatorname{diag}(7.7,\, 7.7,\, 32,\, 4.8,\, 32,\, 32,\, 32,\, 32,\, 32)\). The adaptive parameters are chosen as \(\rho_\phi = 50\), \(\rho_\psi = 0.1\), and \(\sigma_\phi = \sigma_\psi = 0.005\), while the knowledge of the nominal inertia is assumed to be very limited, set as \(\hat{\boldsymbol{D}} = \boldsymbol{I}_n\). These parameters are determined through Lyapunov stability guidelines to achieve an appropriate balance among minimizing tracking error, maintaining smooth control effort, and enabling a rapid response of the impedance controller to unexpected collision events. 

A brief guideline for tuning the control parameters is as follows. The impedance parameters $(M_r, B_r, K_r)$ are first selected using the Routh--Hurwitz criterion to shape the desired behavior in~\eqref{Eq:04} and to ensure safe inertia, damping, and stiffness during collisions. Smaller $K_r$ values increase compliance but may degrade tracking, while larger values reduce sensitivity to contact forces and make the behavior closer to PD control.
The backstepping gains $k_1$ and $k_2$ are then chosen empirically, starting from small positive values and increasing them until they dominate the impedance dynamics and ensure contraction of $\xi_1$ and $\xi_2$. This procedure follows the Lyapunov conditions in Section~3 and allows a wide range of admissible gains. Finally, the adaptive rates $\rho_{\phi_i}, \rho_{\psi_i}, \sigma_{\phi_i}, \sigma_{\psi_i}$ are tuned to regulate adaptation speed and robust bound estimation, adjusted by observing transient and steady-state behavior. The real-robot experiments confirm that the controller remains robust across a broad set of parameter choices.
Figure~\ref{Fig:04} shows the control effort in Scenario~B.3, where the red box is heaviest (\(20~\mathrm{kg}\)), increasing response demands and instability risk. Even after strong collisions, the control remains smooth, while adaptive parameters adjust continuously.

\subsection{Real-Robot Experimental Validation on the FR3 Robot}
\begin{figure}[t]
\begin{center}
\includegraphics[width=0.8\linewidth]{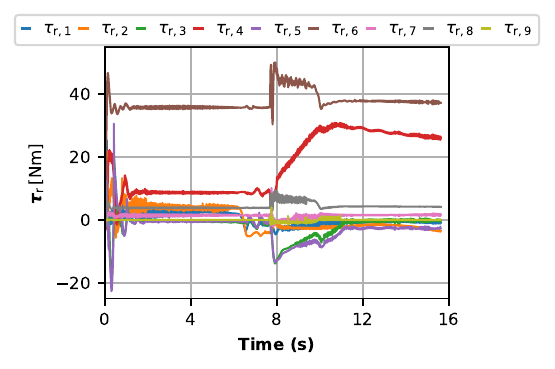}    
\caption{Joint control efforts \(\boldsymbol{\tau}_r\) under the proposed robust adaptive impedance control for Scenario~B.3.}
\label{Fig:04}
\end{center}
\end{figure}
Both the baseline PD controller and the proposed impedance control were implemented on an FR3 Robot tracking reference trajectories during obstacle collisions.
The experiments used MushroomRL~\citep{deramo2021mushroomrl} and PyLibfranka, with bias gravity compensation activated.
 Scenario~C uses PD control, while Scenario~D uses the proposed method. The desired joint-space trajectory is \( \theta_{d,1}(t) = 0.7 \sin(\omega t)\\\left(1 - \exp(-\omega t)(1 + \omega t)\right) \), where \( \omega = 2\pi/20 \), and the remaining desired joint positions are set to \( \theta_{d,2}(t) = 0.38 \), \( \theta_{d,3}(t) = 1.01 \), \( \theta_{d,4}(t) = -2.3 \), \( \theta_{d,5}(t) = -0.51 \), \( \theta_{d,6}(t) = 2.45 \), and \( \theta_{d,7}(t) = 1.17 \). The obstacle has a total mass of \(1.766\,\mathrm{kg}\), as shown in the real robot task space in Figure~\ref{Fig:05}.

\emph{Scenario C: PD Control on a Real Robot}

The results, including collision torques at the joints, control effort, and inner-loop tracking performance, are shown in Figures~\ref{Fig:06}(C.1.a)–(C.1.c), respectively.
PD gains are:\\ \(\boldsymbol{k}_p = \begin{bmatrix}
31.2 & 9.6 & 9.6 & 16.8 & 16.8 & 16.8 & 1.15
\end{bmatrix}^{\top}\) and\\
\(\boldsymbol{k}_d = \begin{bmatrix}
1.92 & 0.96 & 0.96 & 0.96 & 1.92 & 1.92 & 0.14
\end{bmatrix}^{\top}\).
 According to Figure~\ref{Fig:06}(C.1.a)), the peak impact torque is \(-5.025\,\mathrm{Nm}\) at joint~1 (\(t = 13.2\,\mathrm{s}\)). The results indicate that the PD controller ignores contact forces, causing the torque to increase again after an initial decrease to nearly \(-4.53\,\mathrm{Nm}\).

\emph{Scenario D: Proposed RABIC on a Real Robot}

The results for this scenario are shown in Figures \ref{Fig:06}(D.1.a)–(D.1.c), which correspond to the external collision-induced joint torques, control effort, and inner-loop tracking behavior, respectively. According to Figure~\ref{Fig:06}(D.1.a), the peak collision-induced joint torque is \(-4.578\,\mathrm{Nm}\) at \(t = 15.7\,\mathrm{s}\). Thereafter, the external torques decrease below \(-3.25\,\mathrm{Nm}\) to prevent excessive environmental force. This reduction enhances safety, as the robot can no longer push the box as it did under PD control (Figure~\ref{Fig:06}(C.1.a)). The proposed control efforts (Figure~\ref{Fig:06}(D.1.b)) exhibit more active dynamic response compared to PD control (Figure~\ref{Fig:06}(C.1.b)), while remaining soft enough to be implemented on the robot joints and maintaining small inner-loop tracking errors (Figure~\ref{Fig:06}(D.1.c)), unlike PD control where tracking after collision is more difficult (Figure~\ref{Fig:06}(C.1.c)).
  The real-robot torque profiles exhibit reduced chattering compared to simulation, as indicated by the lower root-mean-square (RMS) of the torque rate (real: 0.0065, simulation: 0.2477). This reduction is mainly attributed to inherent mechanical damping, sensor filtering, and the limited control bandwidth of the physical system, which attenuate high-frequency components.
  The remaining control gains are:
\(\boldsymbol{M}_r = \boldsymbol{I}_n\), \(\boldsymbol{B}_r = \operatorname{diag}(6.32,\, 20,\, 20,\, 35,\, 20,\, 20,\, 20)\), \(\boldsymbol{K}_r = \operatorname{diag}(10,\, 1,\, 1,\, 300,\, 1,\, 1,\, 1)\), \(\boldsymbol{k}_1 = \operatorname{diag}(0.48,\, 0.84,\, 0.48, \, 0.84,\\ \, 0.48,\, 0.48,\, 0.48)\), and \(\boldsymbol{k}_2 = \operatorname{diag}(0.64,\, 1.12, \, 0.64,\, 1.12,\, 0.64, \\ \, 0.64,\, 0.64)\). The adaptive parameters are chosen as \(\rho_\phi = \operatorname{diag}(4,\, 20,\, 2,\, 20,\, 2,\, 2,\, 2)\), \(\rho_\psi = \operatorname{diag}(0.05,\, 0.1,\, 0.05,\, 0.5, \\ \, 0.05,\, 0.05,\, 0.05)\), \(\sigma_\phi = \sigma_\psi = 0.001\), and \(l = 0.999\).

\begin{figure}[t]
\begin{center}
\includegraphics[width=0.45\linewidth,height=3.6cm]{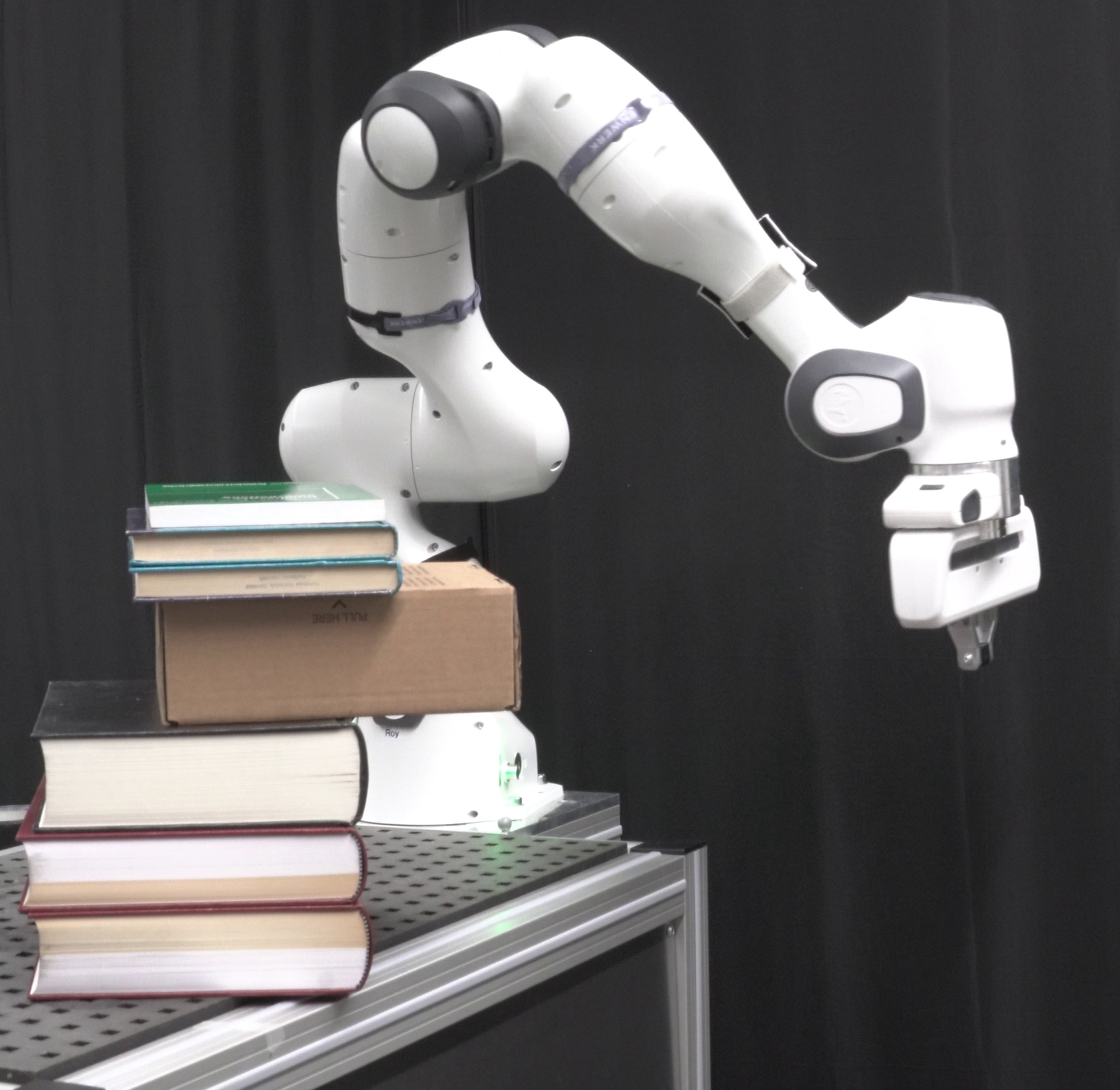}
\hspace{0.02\linewidth}  
\includegraphics[width=0.45\linewidth,height=3.6cm]{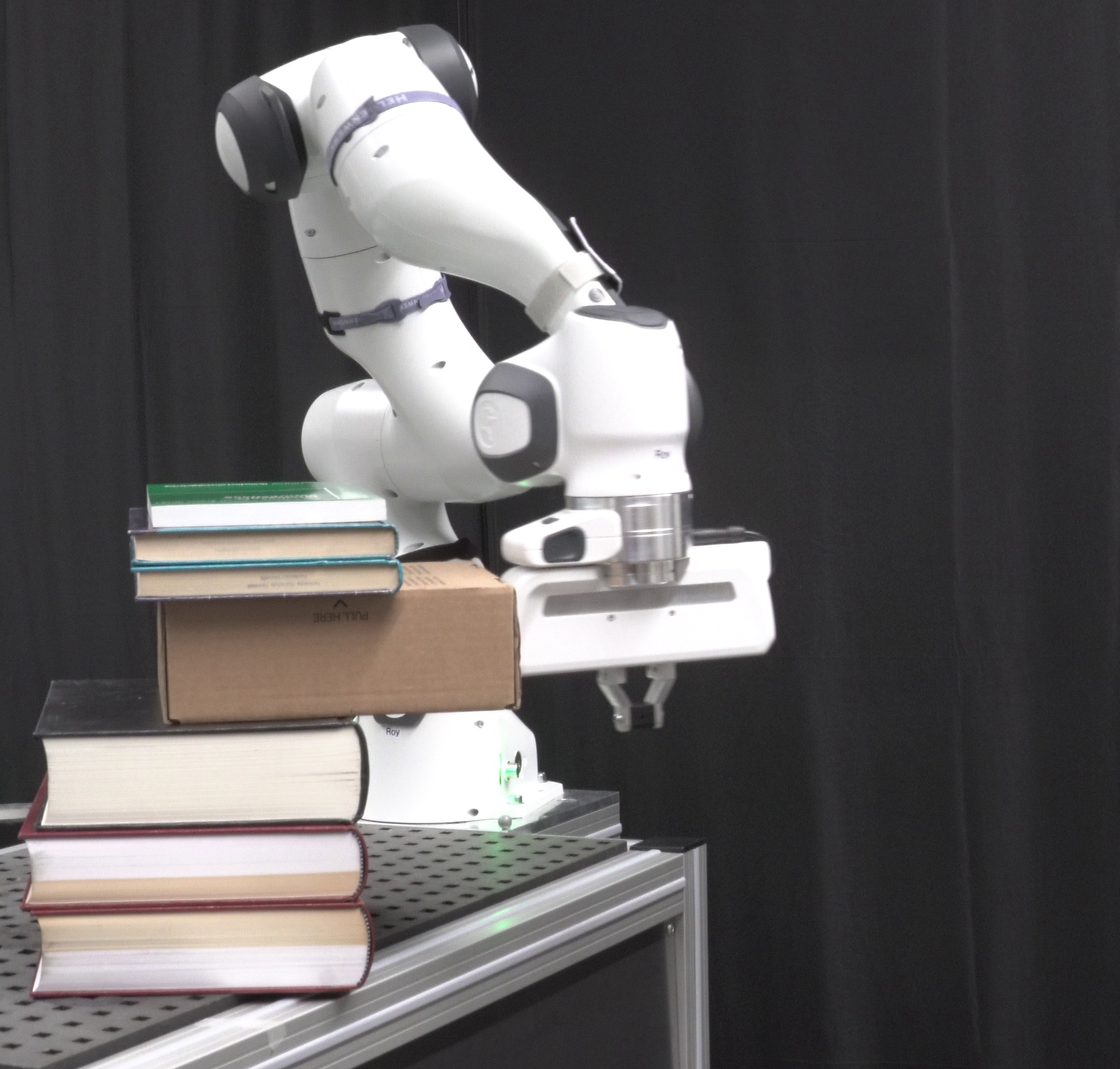}
\caption{The real FR3 Robot during a collision with a box.}
\label{Fig:05}
\end{center}
\end{figure}

\begin{figure*}[ht]
\centering
\begin{minipage}[t]{0.3\textwidth}
  \centering
  \textbf{Scenario C.1.a:~Joint torque}
  \end{minipage}%
\vspace{-0.025em}
\begin{minipage}[t]{0.3\textwidth}
  \centering
  \textbf{Scenario C.1.b:~Control effort}
\end{minipage}%
\vspace{-0.025em}
\begin{minipage}[t]{0.3\textwidth}
  \centering
  \textbf{Scenario C.1.c:~Tracking error}
\end{minipage}%
 \vspace{-0.025em}

\begin{minipage}[t]{0.3\textwidth}
  \centering
  \includegraphics[width=0.8\linewidth]{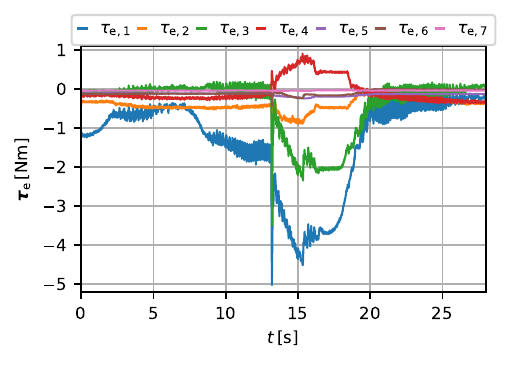}\\[-0.3em]
  \textbf{Scenario D.1.a:~Joint torque}\\[-0.1em]
\includegraphics[width=0.8\linewidth]{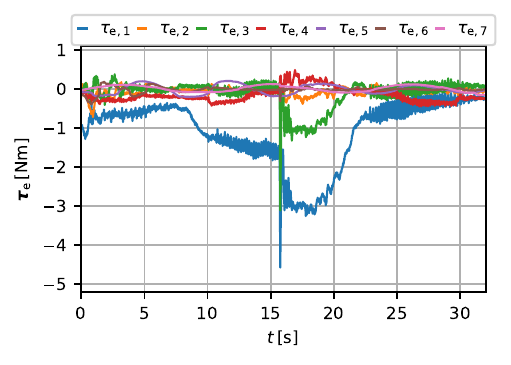}\\[-0.3em]
\end{minipage}%
 \vspace{-0.1em}
\begin{minipage}[t]{0.3\textwidth}
  \centering
  \includegraphics[width=0.8\linewidth]{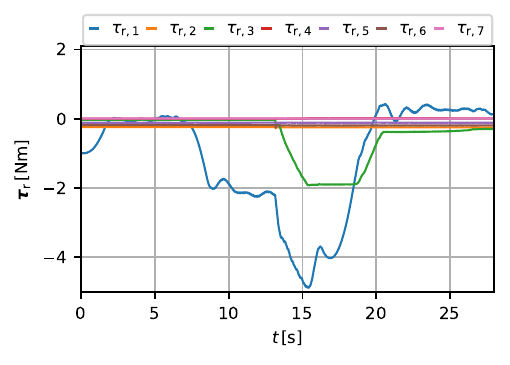}\\[-0.3em]
  \textbf{Scenario D.1.b:~Control effort}\\[-0.1em]
  \includegraphics[width=0.8\linewidth]{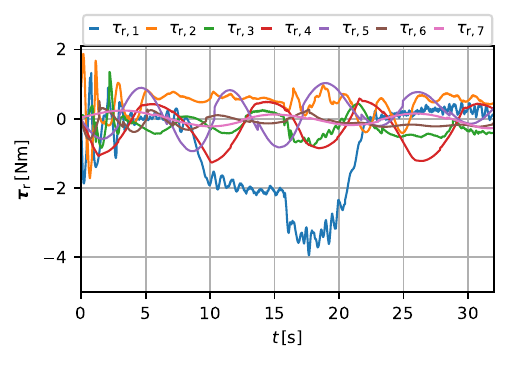}\\[-0.2em]
\end{minipage}%
\vspace{-0.1em}
\begin{minipage}[t]{0.3\textwidth}
  \centering
\includegraphics[width=0.8\linewidth]{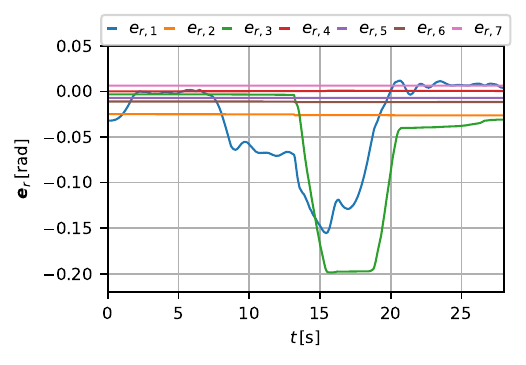}\\[-0.3em]
  \textbf{Scenario D.1.c:~Tracking error}\\[-0.1em]
  \includegraphics[width=0.8\linewidth]{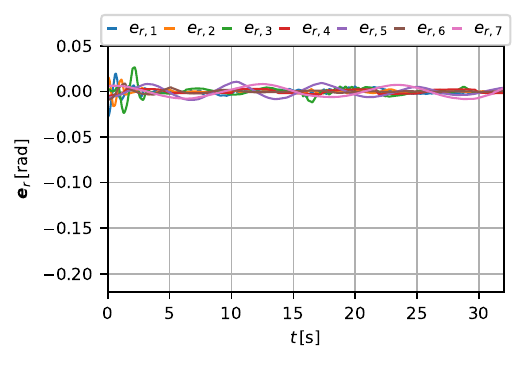}\\[-0.2em]
\end{minipage}
\vspace{-0.1em}

\caption{\textbf{(C)--(D):} Comparison of Scenario~C (PD control) and Scenario~D (proposed impedance control) for the real FR3 Robot. Columns show collision joint torque, control effort, and inner loop tracking error.}

\label{Fig:06}
\end{figure*}

\section{Conclusion and Future Perspectives}
\label{sec:Conclusion}
In this paper, we presented a robust adaptive backstepping impedance control strategy for robots. By addressing dynamic coupling and system uncertainties, the proposed method enables compliant and reliable interaction with unknown environments and unintended collisions. The integration of a Taylor series-based system approximation further enhances adaptability and robustness. Simulation results on a robotic arm mounted on a mobile base and experiments on the manipulator demonstrate the effectiveness and superiority of the proposed approach compared to conventional non-interactive control methods, particularly in safety-critical scenarios. Future work could investigate systematic extensions toward learning-enhanced impedance control. In particular, hybrid model-data approaches could integrate learning components to compensate for unmodeled dynamics or time-varying uncertainties while maintaining stability and safety through the adaptive backstepping structure. Promising directions include incorporating lightweight learning modules, such as locally valid function approximators or modest policy adaptation modules to improve performance in highly unstructured environments without compromising real-time feasibility.


{\small
{\bf Acknowledgment:} This work was partially supported by Research Center for Systems and Technologies - SYSTEC and the Associate Laboratory Advanced Production and Intelligent Systems – ARISE (DOI 10.54499/LA/P/0112/2020) funded by Fundação para a Ciência e a Tecnologia, I.P./ MECI through the national funds. The first author was supported by a Ph.D. Scholarship, grant 2022.11470.BD from FCT, Portugal. This work was also supported by the Deutsche Forschungsgemeinschaft (German Research Foundation, DFG) under Germany’s Excellence Strategy (EXC 3066/1 “The Adaptive Mind”, Project No. 533717223).
}

                                                   







\appendix
\section{}\label{app:lemmas}
\begin{lem} \label{lemma:inequality2}
Suppose \(a\), \(b\), and \(c\) are positive constants, and let \(q_1\) and \(q_2\) be arbitrary real numbers. Then, the following inequality is satisfied:
\begin{equation}
\lvert q_1 \rvert^a\lvert q_2 \rvert^b
\leq \frac{a}{a+b}p\lvert q_1 \rvert^{a+b}+\frac{b}{a+b}p^{-a/b}\lvert q_2 \rvert^{a+b}
\end{equation}
This result is presented in~\citep{qian2001non}.
\end{lem}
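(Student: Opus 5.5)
The plan is to reduce the inequality to the weighted Young inequality with conjugate exponents $r=(a+b)/a$ and $s=(a+b)/b$, which satisfy $1/r+1/s=1$ and $r,s>1$. The positive constant $c$ in the statement plays the role of the scaling parameter $p$, so I take $p>0$ throughout; the factor $p^{-a/b}$ is otherwise undefined. If $q_1=0$ or $q_2=0$, the left-hand side is zero and the right-hand side is nonnegative, so I assume both are nonzero.

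The key step is to distribute the weight $p$ between the two factors so that the exponents come out right. Set
\begin{equation*}
x = p^{a/(a+b)}\lvert q_1\rvert^{a}, \qquad y = p^{-a/(a+b)}\lvert q_2\rvert^{b}.
\end{equation*}
Then $xy=\lvert q_1\rvert^a\lvert q_2\rvert^b$, and the two powers are
\begin{equation*}
x^{r} = p\,\lvert q_1\rvert^{a+b}, \qquad y^{s} = p^{-\frac{a}{a+b}\cdot\frac{a+b}{b}}\lvert q_2\rvert^{a+b} = p^{-a/b}\lvert q_2\rvert^{a+b}.
\end{equation*}
Young's inequality $xy\le x^r/r + y^s/s$, with $1/r=a/(a+b)$ and $1/s=b/(a+b)$, then gives exactly the claimed bound.

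For completeness I would justify Young's inequality for $x,y>0$ from the concavity of $\log$:
\begin{equation*}
\log\!\Big(\tfrac{1}{r}x^r+\tfrac{1}{s}y^s\Big)\ge \tfrac{1}{r}\log x^r+\tfrac{1}{s}\log y^s=\log(xy).
\end{equation*}
Exponentiating yields the result.

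No step is a real obstacle; the only care needed is the bookkeeping of the exponent of $p$ in $y^s$. I would also note explicitly that $p>0$ is required. In the later application, $a=1-l$ and $b=l$ with $0<l<1$ are positive, and $p=l^{l/(1-l)}$ is positive, so the hypotheses hold there.
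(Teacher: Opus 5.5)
Your proof is correct. The rescaling $x=p^{a/(a+b)}\lvert q_1\rvert^{a}$, $y=p^{-a/(a+b)}\lvert q_2\rvert^{b}$ reduces the claim exactly to Young's inequality with conjugate exponents $(a+b)/a$ and $(a+b)/b$. Your exponent bookkeeping giving $y^{s}=p^{-a/b}\lvert q_2\rvert^{a+b}$ is right, and reading the stray constant $c$ as the required positive weight $p$ is the right fix for the statement's typo. The paper does not prove this lemma at all; it only cites Qian and Lin (2001). Your argument is the standard derivation of this inequality, and your check that $a=1-l$, $b=l$, $p=l^{l/(1-l)}$ meet the hypotheses in the later application is accurate.
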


\begin{lem} 
\label{lemma:inequality3}
Assume that \(p_1, p_2, \ldots, p_n\) are positive real numbers and that \(l\) satisfies \(0 < l < 1\). Then, the following inequality holds:
\begin{equation}
(p_1+p_2+\dots+p_n)^l\leq p_1^l + p_2^l +\dots+p_n^l
\end{equation}
This statement is supported by~\citep{yu2005continuous}.
\end{lem}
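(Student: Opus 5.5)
The plan is to exploit the homogeneity of both sides of the inequality in Lemma~\ref{lemma:inequality3} and reduce it to the elementary fact that $x^{l}\ge x$ for $x\in(0,1]$ and $0<l<1$. Both sides are positively homogeneous of degree $l$ in the vector $(p_1,\dots,p_n)$. Therefore I would normalize by the total sum, which turns the claim into the statement that a sum of $l$-th powers of weights summing to one is at least one.

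Concretely, set $S=p_1+p_2+\dots+p_n$, which is strictly positive because every $p_j>0$. Define $x_j=p_j/S$, so that $x_j\in(0,1]$ and $\sum_{j=1}^n x_j=1$. Dividing the desired inequality by $S^{l}>0$ shows it is equivalent to
\begin{equation*}
1\le x_1^{l}+x_2^{l}+\dots+x_n^{l}.
\end{equation*}
Next I would observe that for each $j$ we have $x_j^{l}=x_j\cdot x_j^{l-1}\ge x_j$. This holds because $l-1<0$ and $0<x_j\le 1$ give $x_j^{l-1}\ge 1$. Summing over $j$ yields $\sum_j x_j^{l}\ge\sum_j x_j=1$. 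Multiplying back by $S^{l}$ gives $(p_1+\dots+p_n)^{l}=S^{l}\le \sum_j (S x_j)^{l}=\sum_j p_j^{l}$, which is the claim.

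There is no real obstacle. The only points needing care are that $S>0$, so the normalization is legitimate, and the sign of the exponent $l-1$, which is where the hypothesis $0<l<1$ enters. Equality holds exactly when every $x_j^{l}=x_j$, i.e.\ when $n=1$.

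As a cross-check, an alternative route is induction on $n$ using the two-term case $(a+b)^{l}\le a^{l}+b^{l}$. That case follows from the subadditivity of the concave function $t\mapsto t^{l}$ on $[0,\infty)$, which vanishes at $0$. I would keep the normalization argument as the main proof since it avoids the induction.
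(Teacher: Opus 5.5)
Your proof is correct and complete. Normalizing by $S=\sum_j p_j>0$ reduces the claim to $\sum_j x_j^{l}\ge 1$ for weights $x_j\in(0,1]$ summing to one. The step $x_j^{l}=x_j\,x_j^{l-1}\ge x_j$ is exactly where the hypothesis $0<l<1$ is used.

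The paper gives no proof of this lemma; it only cites \citep{yu2005continuous}. So there is no argument to compare yours against. Your normalization route and the concavity/subadditivity route you mention as a cross-check are the two standard ways to establish this fact, and yours is shorter because it needs no induction.

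One point is worth adding. The paper invokes this lemma in passing from \eqref{Eq:18} to \eqref{Eq:22}, applying it to terms such as $\xi_{1i}^2$ and $\tilde{\psi}_i^2$ that can vanish. Your argument extends immediately to nonnegative $p_j$: discard the zero terms, and if all terms vanish, both sides equal $0$. So your requirement $S>0$ costs nothing in the setting where the lemma is actually used.
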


\end{document}